\newcommand*{\centerfloat}{%
  \parindent \z@
  \leftskip \z@ \@plus 1fil \@minus \textwidth
  \rightskip\leftskip
  \parfillskip \z@skip}
\theoremstyle{plain}
\theoremstyle{definition}
\theoremstyle{remark}
\crefname{ALC@unique}{Line}{Lines}
\Crefname{ALC@unique}{Line}{Lines}
\crefname{algorithm}{Alg.}{Algs.}
\crefname{table}{Tab.}{Tabs.}
\crefname{section}{Sec.}{Secs.}
\crefname{theorem}{Thm}{Thms}
\crefname{lemma}{Lem.}{Lems.}
\crefname{appendix}{App.}{Apps.}
\newcommand{\rosecdl}{{\textsc{RoseCDL}}}% Changed from \sc to this to enable bold sc
\definecolor{linkcolor}{RGB}{83,83,182}
\begin{document}
\runningauthor{Jad Yehya, Mansour Benbakoura, Cédric Allain, Benoît Malézieux, Matthieu Kowalski, Thomas Moreau}

\runningtitle{\rosecdl{}: Robust and Scalable Convolutional Dictionary Learning for Rare event and Anomaly Detection}

\twocolumn[
    \aistatstitle{\rosecdl{}: Robust and Scalable Convolutional Dictionary Learning for Rare-event and Anomaly Detection}

\aistatsauthor{%
  % examples of more authors
Jad Yehya$^{1,*}$ \And Mansour Benbakoura$^{1,*}$ \And Cédric Allain$^1$ \And Benoît Malézieux$^1$ \AND Matthieu Kowalski$^2$ \And Thomas Moreau$^1$
}
\vspace{6pt}
\aistatsaddress{$^1$Université Paris-Saclay, Inria, CEA, Palaiseau, 91120, France.\\
$^2$Université Paris-Saclay, Inria, CNRS, Laboratoire Interdisciplinaire\\ des Sciences du Numériques, Gif-sur-Yvette, France.\\$^*$Equal Contributions.}
]
\begin{abstract}
\looseness=-1
Detecting rare events and anomalies in large-scale signals is essential in fields such as astronomy, physical simulations, and biomedical science.
In many cases, this problem naturally decomposes into identifying common local patterns and detecting deviations that correspond to anomalies.
Convolutional Dictionary Learning (CDL) is a powerful tool for modeling local structures, but its adoption for this task has been limited by computational demands and sensitivity to outliers.
We introduce \rosecdl{}, a novel CDL algorithm designed for robust and scalable modeling  of signal pattern distribution.
\rosecdl{} leverages stochastic windowing for efficient training and incorporates inline outlier detection to enhance robustness.
This enables unsupervised identification of anomalous and rare patterns in long signals based on the local reconstruction loss.
Experiments on real-world datasets show that \rosecdl{} delivers improved detection accuracy and computational efficiency, making CDL practical for challenging detection tasks in large-scale signal analysis.
\end{abstract}
\section{INTRODUCTION}

Identifying recurring patterns and anomalies in a signal is a crucial task in many scientific fields, from finding QRS complex -- \emph{a.k.a.} heartbeats -- in ECG~\citep{Luz2016} to detecting blood-cells in biological images~\citep{Yellin2017} or particular celestial objects in astronomical images~\citep{Giavalisco2004}.
In large-scale settings, this process must be automated.
Supervised methods have been developed to address these tasks, often relying on large annotated datasets and deep learning models~\citep{Murat2020, Choudhary2024, Cornu2024}.
However, obtaining labeled data can be costly, time-consuming, or even infeasible, especially when the patterns of interest are rare or hard to characterize.
This motivates the development of unsupervised methods capable of automatically uncovering patterns distribution in large datasets.

Convolutional Dictionary Learning~\citep[CDL;][]{Grosse2007} is a powerful method for modeling local structures in signals, with applications in audio, neuroscience, and image processing \citep{dupre2018multivariate,Papyan_2017_ICCV}.
Its core principle is to represent an observed signal as the convolution of a learned dictionary of patterns with a sparse activation vector.
By aggregating over many signals, CDL learns a dictionary that captures recurring local structures, making it particularly suitable for unsupervised pattern discovery.
Despite its promise, the use of CDL in practice remains limited.
Two challenges, in particular, hinder its applicability to large, noisy datasets.

\paragraph{Scalability.}
Learning with CDL is computationally expensive, primarily due to the sparse coding phase.
Considerable effort has been devoted to improving scalability, including deterministic optimizations~\citep{wohlberg2015efficient}, local-window methods that partition the signal~\citep{Grosse2007,Papyan_2017_ICCV, moreau2020dicodile, dragoni_2022}, and approximate techniques such as learned sparse coding and algorithm unrolling~\citep{gregor_lecun,Tang2021,tolooshams_2018,tolooshams2022stable,malezieux2021understanding}.
Online algorithms~\citep{mairal2010online,mensch2016dictionary,Liu2017,Zeng2019} further reduce computational cost by updating the dictionary on subsets of the data.
However, existing approaches either struggle to generalize across large datasets or achieve only limited computational gains.

\paragraph{Robustness to anomalies and rare events.}
CDL is highly sensitive to anomalies: because outliers are sparse but often high-magnitude, the algorithm may mistakenly interpret them as meaningful patterns.
Theoretical work in classical dictionary learning has formalized this risk~\citep{gribonval_sparse_spurious}, and a few practical remedies have been proposed, such as Elastic Net regularization~\citep{mairal2010online} or heavy-tailed fitting terms~\citep{jas2017learning}.
Yet these methods either increase computational cost or only partially address robustness.
Related research in outlier detection (OD) has emphasized reconstruction-based frameworks, where models are trained to reconstruct normal data while failing on anomalies~\citep{Ruff2021, Schmidl2022}. 
While the majority of reconstruction-based OD methods are semi-supervised, fully unsupervised methods perform competitively~\citep{darban2024deep}, and similar principles are used for rare-event detection~\citep{shyalika2024}.
This idea also resonates with classical robust regression, where corrupted observations are trimmed based on residual magnitude.
The Least Trimmed Squares (LTS) estimator~\citep{rousseeuw1984least,rousseeuw2005robust} and its extensions, including Sparse LTS~\citep{alfons2013sparse} demonstrate the effectiveness of trimming in high-dimensional and structured settings.

\paragraph{Our contributions.}
This paper introduces a novel CDL algorithm called RObust and ScalablE CDL (\rosecdl{}) that addresses these limitations.
To ensure scalability, we propose a \textbf{stochastic windowing approach}, where the sparse coding problem is solved approximately on a small, randomly sampled data windows, and the dictionary is updated using the results of local computations.
To ensure robustness, we integrate an \textbf{inline outlier detection} mechanism based on local reconstruction error, which discards patches poorly explained by the learned dictionary.
The same reconstruction-error criterion can be applied at test time to identify anomalies and rare events, providing a practical unsupervised detection tool alongside representation learning.
The resulting algorithm learns dictionaries that capture the local structure of data, remains robust to artifacts, scales to large datasets, and naturally doubles as an unsupervised anomaly detection method.
More broadly, it \textbf{reframes CDL from a reconstruction problem to the estimation of the patch distribution underlying a signal}, enabling both robust representation and principled rare-event and anomaly detection.
We demonstrate its effectiveness on both synthetic and real-world data.

\section{FINDING COMMON AND RARE PATTERNS IN SIGNALS: THE \rosecdl{} ALGORITHM}
\label{sec:methods}
Let $\bx \in \R^{T}$ be a univariate signal with length $T$.
Convolutional Dictionary Learning (CDL) consists of finding a dictionary: $D = (\bd_k)_{k \in \llbracket 1, K \rrbracket} \in \R^{K \times L}$ of $K$ patterns of length $L \ll T$, and corresponding activation vectors $Z = (\bz_k)_{k \in \llbracket 1, K \rrbracket} \in \R^{K \times (T-L+1)}$, that minimize the distance between $\bx$ and $\hat{\bx} = D * Z = \sum_k \bd_k * \bz_k$, where $*$ denotes the convolution.

\begin{figure*}[t]
    \centering
    \begin{adjustbox}{center}
    \includegraphics[width=0.8\textwidth]{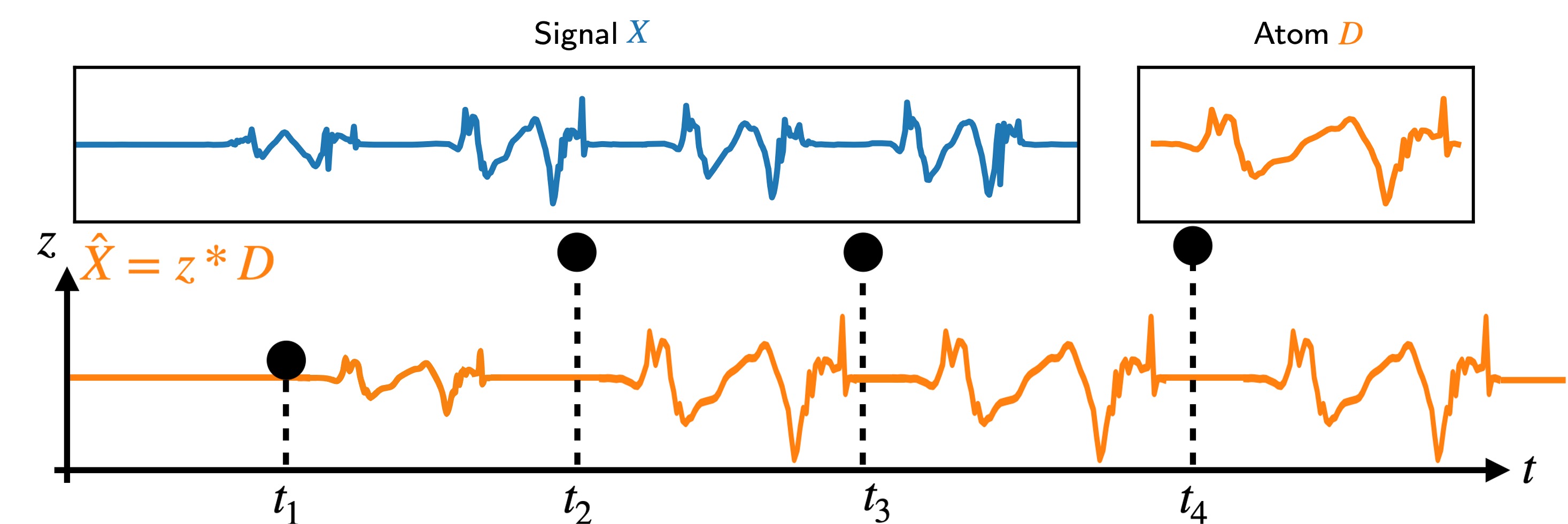}
    \end{adjustbox}
    \caption[Schematic operation of the CDL.]{
        Schematic operation of the CDL 1D univariate signal, adapated from the \texttt{dicodile} package example\protect\footnotemark.
        The output of the CDL is composed of a set of atoms (here 1) alongside their respective activations.
    }
    \label{fig:cdl_resume}
\end{figure*}

This is achieved by solving the following optimization problem:
% \begin{equation}
%     \label{eqn:cdl}
%         \min_{D, Z} \quad F(D, Z; \bx) = \underbrace{\frac{1}{2} \|\bx - D * Z\|_2^2}_{f_\bx(Z)} + \lambda \|Z\|_1,
%         \quad \text{s.t.} \quad \|\bd_k\|_2^2 \le 1,\ \forall k \in \llbracket 1, K \rrbracket,
% \end{equation}

\begin{equation}
\label{eqn:cdl}
\begin{aligned}
    \min_{D, Z} \quad & F(D, Z; \bx) 
        = \underbrace{\tfrac{1}{2} \|\bx - D * Z\|_2^2}_{f_\bx(Z)} 
        + \lambda \|Z\|_1, \\
    \text{s.t.} \quad & \|\bd_k\|_2^2 \le 1, \quad 
    \forall k \in \llbracket 1, K \rrbracket.
\end{aligned}
\end{equation}

where $\| Z \|_1 = \sum_k \| \bz_k \|_1$.
\cref{fig:cdl_resume} illustrates the CDL model for an univariate signal.
This problem can easily be extended to multivariate and multidimensional signals such as images by adapting the convolution operator to work with tensors.

When working with large signal databases, the primary interest is not finding the best reconstruction of the signal used to learn the dictionary but instead finding the patterns that best model the population.
Therefore, this paper focuses on \textbf{characterizing the distribution} of the signals $\bx$, with the following optimization problem:
\begin{equation}
    \label{eqn:expected_cdl}
    \begin{aligned}
        \min_D \quad & \bbE_{\bx} \left[ \min_Z F(D, Z; \bx) \right], \\
        \text{s.t.} \quad & \|\bd_k\|_2^2 \le 1,\ \forall k \in \llbracket 1, K \rrbracket.
    \end{aligned}
\end{equation}
While this formulation departs from classical optimization-based literature for CDL~\citep{wohlberg2015efficient}, it is often used with deep CDL approaches to learn denoisers that generalize to unseen images using super learning losses~\citep{scetbon2021deep,zheng2021deep,deng2023deep}.
Here, we consider the unsupervised loss $F$ to learn the dictionary of event patterns directly from the signals.

For both formulation~\eqref{eqn:cdl} and~\eqref{eqn:expected_cdl}, classical solvers rely on an iterative optimization algorithm to solve the sparse coding problem in
$Z$ for fixed $D$, and alternate it with a dictionary update step.
The sparse coding step is typically solved using iterated soft-thresholding iterations~\citep[e.g., FISTA,][]{beck_fast_2009,chambolle2015convergence}, which reads:
\begin{equation}\label{eqn:fista}
\begin{split}
       t_0 &= 1\\
       Z_0 &= Y_0\in \bbR^{K\times T - L +1};\ t_0 = 1\\
       Z_{k+1} &= \mathrm{St}(Y_k - \nu \nabla_Z f_{\bx}(Z), \nu\lambda)\\
       t_{k+1} &= \frac12(1 + \sqrt{1 + 4t_k^2}) \\
       Y_{k+1} &= Z_k + \frac{t_k -1}{t_{k+1}}(Z_{k+1} - Z_k)
\end{split}
\end{equation}
with $\mathrm{St}$ the soft-thresholding operator $\mathrm{St}(Z) = \mathrm{sign}(Z)(|Z| - \lambda)_+$ and $\nu$ the step size.
The dictionary update may rely on projected gradient descent constrained to the unit $\ell_2$-ball.
However, these methods become computationally expensive on long signals or large datasets, as both the sparse codes and gradients must be computed over all the signals. Online methods mitigate the need to consider all signals, but at the expense of increased memory usage and still require solving the sparse coding problem on full signals. This motivates using stochastic or localized approximations, especially in settings where robustness to artifacts or outliers is required.

\subsection{Stochastic windowing}
Due to the convolutional structure of the CDL model, points in the far-apart signal are only weakly dependent~\cite{moreau2020dicodile}.
Indeed, the value of the sparse code at time $t$ is seldom impacted by the values at time $t + s$, with $s$ larger than the size of the dictionary $L$.
We will therefore perform windowing spanning across all channels synchronously.

\footnotetext{\url{https://tommoral.github.io/dicodile/auto_examples/plot_gait.html}}

By considering long enough windows of the signal, the problem~\eqref{eqn:expected_cdl} can be approximated by
\begin{equation}
    \label{eqn:patch_cdl}
    \begin{aligned}
        \min_D \quad & \bbE_{\tau} \left[ \min_Z F(D, Z; \bx_{\tau}) \right], \\
        \text{s.t.} \quad &  \|\bd_k\|_2^2 \le 1,\ \forall k \in \llbracket 1, K \rrbracket,
    \end{aligned}
\end{equation}
where $\bx_\tau$ are windows of the signal, starting at $\tau \in \llbracket 1, T - W_{\mathrm{win}} + 1 \rrbracket$ and of size $W_{\mathrm{win}}$ such that $L \le W_{\mathrm{win}} \ll T$.
This formulation is only approximate as it produces inexact sparse code for the windows, due to border effect.
But with the weak spatial dependence of the model, if the window is large and the activation are sparse, these effects are negligible.
Moreover, by considering all windows with overlap, we limit the impact of specific border effects in the algorithm.
% Instead of solving the optimization problem \cref{eqn:cdl} for the whole signal, we split it into windows and solve the resulting subproblems in parallel.
% We introduce two parameters: the size of the windows $W_{\mathrm{win}}$ and the number of windows $N_W$.
To minimize~\eqref{eqn:patch_cdl}, we propose \rosecdl{}, a stochastic gradient descent algorithm aimed to characterize the distribution of patches in the signal.

We process as follows: At each step of the CDL outer problem, we begin by sampling $N_W$ windows $(\bx_w)_{1 \le w \le {N_W}}$ from a uniform distribution.
The windows are selected with overlap to limit the bias due to border effects.
We then compute an approximate sparse code $\Zstar_w$ for each subproblem by minimizing the sparse coding over the window.
Following \cite{malezieux2021understanding,tolooshams2022stable}, whose results suggested that optimizing over $D$ did not require precise sparse coding at each step, we use an approximation $Z^{\Nfista}$ of $\Zstar(D; \bx)$, where $Z^{\Nfista}$ is given by $\Nfista$ iterations of the FISTA algorithm~\eqref{eqn:fista}.

Using these per-window approximated activation vectors, we can then update the dictionary $D$.
Due to the stochastic nature of the algorithm, we depart from traditional alternate minimization strategies and perform only one gradient step.
This single-step update strategy is further justified by the inherent noise in the gradient estimate, which arises from both the stochastic sampling of windows and the use of approximate sparse codes. In such settings, performing multiple gradient steps per batch does not significantly reduce the update variance and may even amplify the impact of the approximation error in the sparse coding stage.
This strategy is in-line with the deep CDL algorithm, but we do not backpropagate the gradient through the sparse code approximation $Z^{\Nfista}$, as it leads to unstable Jacobian estimation for the original problem~\eqref{eqn:patch_cdl}, as described in \cite{malezieux2021understanding}.
% Then, we compute the loss and its gradient with respect to $D$, excluding these abnormal segments.
To stabilize the learning and reduce the number of parameters, we compute the optimal stepsize of the dictionary update with the Stochastic Line Search algorithm \citep[SLS;][]{vaswani2019painless}.
This algorithm can be efficiently implemented using deep learning frameworks and can leverage GPU acceleration.
The complete procedure is summarized in \cref{alg:windowed_cdl}.
We note that when trimming is ignored and the sparse codes are solved exactly, \rosecdl{} fits within the SGD setting, with stochasticity arising from the sampled windows. In this setting, convergence properties are well established~\citep{bottou2018optimization}

\begin{algorithm}[t]
    \caption{CDL with stochastic windowing.}
    \begin{algorithmic}
        \INPUT X, $N_{\mathrm{iter}}$, $N_W$, $\Nfista$
        \STATE Initialize $D^{(0)}$
        \FOR{$0 \le i \le N_{\mathrm{iter}} - 1$}
        \STATE Sample $N_W$ windows in the dataset: $(X_w)_{w \in \llbracket 1, N_w \rrbracket}$
        \FOR{$1 \le w \le N_W$}
        \STATE Compute the approximate sparse code\\ $Z^{\Nfista}_w \approx \Zstar_{w}(D^{(i)}; X_w)$
        \STATE Compute an outlier mask (cf. \cref{subsec:outlier_detection})
        \STATE Compute the loss $F$ and its gradient $\nabla_{D} F$ outside the outlier mask
        \ENDFOR
        \STATE Compute best step size $\alpha_i$ with SLS
        \STATE $D^{(i+1)} \leftarrow D^{(i)} - \alpha_i \nabla_D \sum_w F_w (D^{(i)}, Z^{\Nfista}_w; X_w)$
        \ENDFOR
        \OUTPUT $D^{(N_{\mathrm{iter}})}$
    \end{algorithmic}
    \label{alg:windowed_cdl}
\end{algorithm}

\subsection{Inline outlier detection}
\label{subsec:outlier_detection}
The second component of \rosecdl{} is its inline outlier detection framework.
Let $\bx$ be a signal of the form:
\begin{equation}
\label{eqn:three_regimes}
\bx = \bd_a * \bz_a + \bd_b * \bz_b + \bn,
\end{equation}
where $\bd_a$ is a common pattern in the signal, $\bd_b$ is a rare pattern, and $\bn$ is an abnormal pollution (e.g., artifacts, sudden spikes in the signal).
Traditional CDL algorithms are likely to struggle to recover $\bd_a$ for two main reasons.
First, artifacts in $\bn$ often have a significant variance, which can distract the CDL from the significant signal.
Second, even when the anomalies in $\bn$ are well discarded, the rare pattern $\bd_b$ acts as a pollution that prevents the algorithm from learning the atom $\bd_a$.
While effects of $\bn$ and $\bd_b$ are often discarded through preprocessing~\cite{dupre2018multivariate}, it is often very difficult to use this reliably on large population datasets.

The intuition behind our approach is that if $\bd_a$ is sufficiently represented in $\bx$, then most of the patches of $\bx$ should contain information relative to this pattern.
Consequently, these patches should be the best reconstructed ones.
In comparison, the patches containing non-zero values of $\bn$ are expected to have a high reconstruction error due to the unpredictable nature of anomalies and artifacts.
Finally, given a dictionary $\bd$ that is more correlated with $\bd_a$ than $\bd_b$, the patches containing chunks of $\bd_b$ are expected to have a higher reconstruction error than those with $\bd_a$.

\begin{figure*}[ht]
\centering
\includegraphics[width=\linewidth]{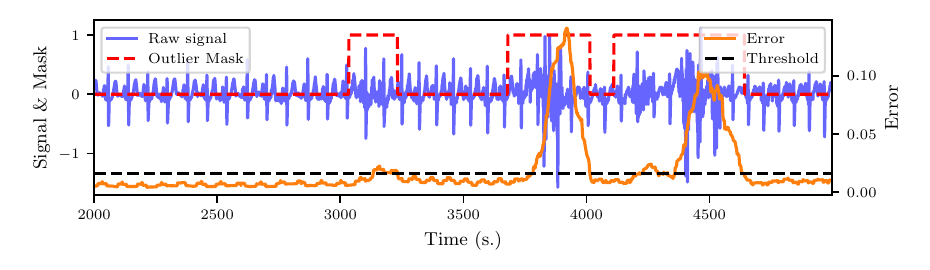}
\vskip-1em
\caption{Raw signal $X$, reconstruction error, threshold and learned outlier mask on subject a02 (minute 56) of Physionet Apnea-ECG data set. Detection method is based on modified z-score (MAD), with $\alpha=3.5$. The method correctly identifies outliers blocks.}
\label{fig:apnea_outliers_detection_viz}
\end{figure*}

To summarize, the distribution of patch reconstruction errors $F(D, Z; \bx_{\tau})_{\tau}$ is expected to have three modes:
\begin{enumerate}
\item The main, low-value mode corresponding to chunks of $\bx_a = \bd_a * \bz_a$,
\item A secondary mode with a slightly higher value corresponding to chunks of $\bx_b = \bd_b * \bz_b$,
\item A high-value mode corresponding to chunks of $\bn$.
\end{enumerate}
Consequently, the reconstruction error of a patch can be used as an indicator to determine whether it contains information about the relevant signal.
We leverage this intuition to inflect the learning trajectory of $\bd$ towards $\bd_a$.
We define the set $\mathcal{P}_{\beta} = \{\tau | F(D, Z; \bx_{\tau}) < \beta\}$, where $\beta \in [0, 1]$ is a threshold selecting the proportion of outliers.
With a well chosen $\beta$, $\mathcal{P}_{\beta}$ indicates the patches that coincide with realizations of $\bx_a$.
Therefore, to make the CDL converge towards $\bd_a$, we change the objective value for the dictionary updates for its trimmed version:
\begin{equation}
\widetilde{F}(Z,D;\bx) = \frac{1}{W_{\mathrm{patch}}} \sum_{\tau \in \mathcal{P}_{\beta}} F(Z, D; \bx_{\tau}).
\end{equation}

Using this trimmed loss, we can show that in simple settings, the \rosecdl{} algorithm is more robust to the presence of outliers.

\begin{restatable}[Stability of the common pattern]{proposition}{stability}
Consider a population of signals $X$ composed of two patterns $\bd_a$ and $\bd_b$, with activations such that the patterns do not overlap in the signal, and corrupted by an additive Gaussian noise $\epsilon \sim \mathcal N(0, \sigma Id)$.
Introduce $c = \bd_a^\top \bd_b$ and $\rho$ the proportion of rare-event pattern $\bd_b$ activations in the population.
Then, in the noiseless setting,
\begin{enumerate}\renewcommand{\theenumi}{\roman{enumi}}
\item $\bd_a$ is a fixed point of the classical CDL algorithm for $K = 1$ if $c \le \lambda$
\item $\bd_b$ is a fixed point of \rosecdl{} algorithm for $K=1$ if $c \le \lambda$ or the \rosecdl{} algorithm is used with an outlier threshold trimming a proportion of windows greater than $\rho$.
\end{enumerate}
\end{restatable}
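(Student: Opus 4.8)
The plan is to characterise a fixed point of the (trimmed) dictionary update through a first-order stationarity condition, and then to compute the sparse codes explicitly on the window types allowed by the non-overlap assumption. Since $K=1$, write $\bd$ for the single atom constrained to the unit sphere. A unit-norm atom $\bd$ is left unchanged by one projected-gradient/line-search step exactly when the (possibly trimmed) dictionary gradient is \emph{radial}, i.e.\ proportional to $\bd$: in that case $\bd-\alpha\nabla$ stays colinear with $\bd$ and the projection onto $\{\|\bd\|_2\le 1\}$ renormalises it back to $\bd$ for every admissible step size $\alpha$ returned by SLS. So in each case it suffices to show $\nabla_{\bd}\widetilde F\propto\bd$, with $\widetilde F=F$ and no trimming for the classical algorithm of item~(i).

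First I would compute the sparse code window by window. Because activations of $\bd_a$ and $\bd_b$ do not overlap and the noise is absent (noiseless setting), every window is, up to border terms, of one of three pure types: it carries a single $\bd_a$-activation, a single $\bd_b$-activation, or a chunk of $\bn$. On a window that matches the current atom $\bd$ the one-dimensional Lasso reduces to a scalar soft-thresholding, giving code $\mathrm{St}(a,\lambda)=a-\lambda$ for amplitude $a>\lambda$, residual $\lambda\bd$, and hence a per-window gradient proportional to $\bd$. On a window carrying the other pattern the relevant correlation is $c=\bd_a^\top\bd_b$, and the Lasso optimality (support-screening) condition forces the code to vanish as soon as this correlation falls below the penalty, i.e.\ when $c\le\lambda$ (with unit activation amplitudes); a zero code contributes zero gradient.

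These two observations give item~(i) and the first branch of item~(ii) at once. For item~(i), take $\bd=\bd_a$: the $\bd_a$-windows give a radial gradient, the $\bd_b$-windows are screened off by $c\le\lambda$ and contribute nothing, so $\nabla_{\bd_a}F\propto\bd_a$ and $\bd_a$ is a fixed point of classical CDL. The first branch of item~(ii) is the mirror computation at $\bd=\bd_b$: the $\bd_b$-windows yield a radial gradient, the $\bd_a$-windows are screened off by the same condition $c\le\lambda$, and since trimming can only drop windows while the $\bd_a$-windows already contribute nothing and the $\bd_b$-windows are the best reconstructed (hence retained in $\mathcal P_\beta$), the trimmed gradient is again proportional to $\bd_b$. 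Thus $\bd_b$ is a fixed point of RoseCDL whenever $c\le\lambda$.

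The remaining, and hardest, part is the second branch of item~(ii): showing $\bd_b$ is a fixed point of RoseCDL through trimming alone, when $c>\lambda$ so that screening no longer applies. Here I would first establish the ordering of the window reconstruction errors at $\bd=\bd_b$: a direct computation gives $F=\tfrac12\lambda^2$ on a $\bd_b$-window versus $F=\tfrac12\big((a')^2(1-c^2)+\lambda^2\big)>\tfrac12\lambda^2$ on a $\bd_a$-window, so the $\bd_b$-windows are strictly the best reconstructed and are precisely the ones kept by the low-error set $\mathcal P_\beta$. The goal is then to calibrate the trimming level so that $\mathcal P_\beta$ retains only $\bd_b$-windows, which once more makes $\nabla_{\bd}\widetilde F\propto\bd_b$. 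The main obstacle is exactly this book-keeping of proportions: I must verify that discarding a fraction greater than $\rho$ suffices to remove \emph{every} window generated by the common pattern (and every noise chunk) rather than only part of them. I would therefore track the population shares of the three window types, check carefully how the retained set depends jointly on the threshold and on $\rho$, and handle the border windows that mix two regimes; reconciling the stated threshold with $\rho$ is the delicate point on which the statement ultimately turns.
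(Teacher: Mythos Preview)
Your scaffolding---fixed point $\Leftrightarrow$ radial projected gradient, explicit one-dimensional soft-thresholding on each non-overlapping window, and screening ($z^*=0$) whenever the cross-correlation drops below $\lambda$---is exactly the paper's. Item~(i) and the $c\le\lambda$ branch of item~(ii) go through as you describe.

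The obstacle you flag at the end is not a delicate book-keeping issue: it is a genuine contradiction, and it comes from reading item~(ii) literally. The proposition is titled ``Stability of the \emph{common} pattern,'' and the paper's own proof establishes the RoseCDL fixed-point property for $\bd_a$, not $\bd_b$; the printed $\bd_b$ is a typo. At $\bd=\bd_b$ the common-pattern windows occupy a fraction $1-\rho$ of the population, so trimming a fraction merely $>\rho$ can never remove them all---no amount of border-window accounting will rescue this. At $\bd=\bd_a$, by contrast, the $\bd_b$-windows are the rare ones (fraction $\rho$) and are precisely the worst reconstructed: the paper computes $F=\tfrac12\bigl(1-(c-\lambda)^2\bigr)$ in the noiseless case, which is decreasing in $c\in[\lambda,1]$, so the matching windows ($c=1$) have strictly smaller loss than the $\bd_b$-windows ($c=\bd_a^\top\bd_b<1$). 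Trimming any fraction $>\rho$ therefore discards all $\bd_b$-windows, and the surviving expected gradient is $-(1-\rho)\lambda(1-\lambda)\bd_a$, which is radial. That closes the argument immediately.

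Two minor slips. There is no separate ``$\bn$'' regime in this proposition---only $\bd_a$, $\bd_b$, and the Gaussian noise that is then set to zero---so your three-type window taxonomy has one type too many. And your per-window loss values are off (you dropped the $\ell_1$ term; on a matching window the value is $\tfrac12\bigl(1-(1-\lambda)^2\bigr)$, not $\tfrac12\lambda^2$), though the ordering conclusion you actually use survives.
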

The proof is deferred in \Cref{sec:study}.
This theoretical analysis is intentionally illustrative: its purpose is to provide intuition on why trimming improves robustness, rather than to serve as a full convergence guarantee for the algorithm. Nevertheless, this demonstrates that even in ideal conditions (no overlap and starting with the right pattern), classical CDL fails to recover the common pattern due to interference from rare events, while \rosecdl{} corrects this via trimming.
However, a critical design choice is the selection of the statistic $\beta$ used as the threshold, discussed below.

\begin{figure*}[!ht]
    \centering
    \includegraphics[width=\textwidth]{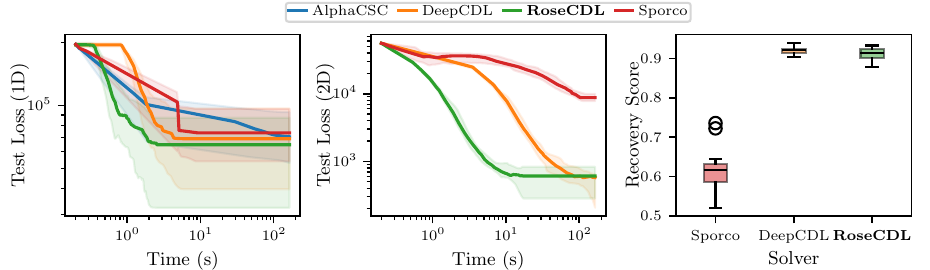}
    \caption[\rosecdl{} efficiency on simulated and real data]{
    Comparison of optimization runtime for \textbf{\rosecdl{}}, \alphacsc, Sporco, and DeepCDL in 1D and 2D settings, highlighting the superior scalability and convergence speed of \textbf{\rosecdl{}}. The runtime plots show the evolution of test loss over time. The third subplot reports the dictionary recovery score at convergence for 2D data. As \alphacsc cannot be applied to 2D data, only results for Sporco, DeepCDL, and \textbf{\rosecdl{}} are shown.
    }
    \label{fig:RoseCDL_efficiency}
\end{figure*}

\paragraph{Threshold selection.}
From the distribution of patch reconstruction errors $(\beps_{\tau})_{\tau} = (F(D, Z; \bx_{\tau}))_{\tau}$, one needs to compute a threshold $\beta$ which separates the outlier patches in $\bx$ from the normal ones.
The goal is to detect extreme points in the distribution, which are too large compared to the population of patch errors.
The outlier detection literature provides three main methods relevant in this case:
\begin{enumerate}
\item The \textbf{method of quantiles}, where $\beta = Q_{\beps, (1-\alpha)}$ is the quantile of order $(1 - \alpha)$ of the set $\beps$,
\item The \textbf{$z$-score method}~\citep{iglewicz1993volume}, where each error $\beps_{\tau}$ has an associated score $z_{\tau} = (\beps_{\tau} - \mu_{\beps}) / \sigma_{\beps}$, with $\mu_{\beps}$ and $\sigma_{\beps}$ denoting respectively the mean and standard deviation of the distribution of errors, and where outliers are defined as observations such that $\abs{z_{\tau}} > \alpha$, generally $\alpha = 2 \text{ or } 3$, thus having $\beta = \mu_{\beps} + \alpha\ \sigma_{\beps}$,
\item The \textbf{modified $z$-score} (MAD;~\citealt{iglewicz1993volume}), where, similarly as the $z$-score, each error point is associated with a score based on the median, and the resulting threshold is $\beta = \mathrm{Med}_\beps + (\alpha\ \mathrm{Mad}_\beps)/0.6745$, with $\mathrm{Mad}_\beps = \mathrm{Med}\pars{\abs{\beps - \mathrm{Med}_\beps }}$, where \text{Med} denotes the median operator, and generally $\alpha = 3.5$.
Here, $\alpha = 3.5$ is not a tunable hyperparameter but a standard choice in robust statistics~\citep{iglewicz1993volume}. The threshold is computed adaptively from the empirical distribution of reconstruction errors, making it data-driven, statistically grounded, and independent of any contamination-rate assumptions.
It is not inflated by extreme reconstruction errors caused by outliers, which prevents it from being excessively high, which would reduce detection sensitivity.

\end{enumerate}
These methods are initially bilateral, but only the upper bound is considered in this work because we aim to detect outliers with large reconstruction errors.
\cref{fig:apnea_outliers_detection_viz} illustrates the outlier detection method on real ECG data, with the outliers mask computed with the reconstruction error and the threshold. 

\paragraph{Role of the outlier mask for rare-event detection.}
The inline outlier detection module is a central component of our algorithm.
It enables the computation of an outlier mask during training, serving two key purposes:
(1) it excludes identified outliers from the loss function, thereby improving the robustness of dictionary learning;
(2) it enables the unsupervised detection of rare events in the signal by interpreting the outlier mask as a detection map.
Indeed, provided a signal $\bx$ in the form of Eq.~\eqref{eqn:three_regimes}, we have shown that \rosecdl{} is able to extract the contribution of the common pattern $\bd_a$.
Consequently, the corresponding outlier mask can be used to select the residual signal $\bx' = \bd_b * \bz_b + \bn$.
Running another instance of \rosecdl{} on $\bx'$ then allows one to recover the pattern $\bd_b$ and localize its occurrences $\bz_b$.
In multivariate time series, \rosecdl{} computes reconstruction errors on multichannel patches, so even small deviations on individual channels can accumulate into a global deviation that exceeds the computed threshold. In practice, multivariate anomalies are detected whenever their combined effect sufficiently alters the local motif.

\section{NUMERICAL EXPERIMENTS}
\label{sec:numerical_experiments}

In this section, we present numerical experiments that demonstrate the scalability and robustness of \rosecdl, as well as its ability to detect anomalies in temporal (1D) and spatial (2D) signals, with an arbitrary number of channels, on simulated and real-world data.
We implemented the algorithm using the \texttt{Pytorch} framework~\citep{Paszke2017}.
\footnote{The code is available at \url{https://github.com/tomMoral/RoseCDL}.}
The experiments fall into three categories, each targeting a different aspect: scalability, robustness to outliers, and anomaly detection.
% For conciseness, we report one representative experiment per category here, and defer additional results to Appendix~\ref{sec:add_exp}.

\subsection{Scalability}
We performed a comprehensive comparison between \rosecdl{} and three state-of-the-art CDL methods: \alphacsc~\citep{dupre2018multivariate}, Sporco~\citep{wohlberg-2017-sporco}, and DeepCDL, a variant of~\citet{tolooshams2022stable}.
DeepCDL represents the unrolled variant of \rosecdl{} where windows are not stochastically sampled and gradients account for the Jacobian of the sparse code computed through backpropagation.
In contrast, \rosecdl{} employs an alternating minimization scheme as explained in \cref{sec:methods}.

We evaluate each method's computational cost and runtime on two large-scale datasets.
The one-dimensional (1D) experiment is performed on 20 synthetic multivariate signals containing 50{,}000 time samples with two channels (see \cref{subsec:cdl_data_simulation} for details).
The two-dimensional (2D) experiment is conducted on a semi-synthetic dataset of images of $2000 \times 2000$ pixels.
The cost is evaluated as the value of the objective function $F(D, Z^{\star}(D); \bx)$ at each iteration on a separate test set, with $Z^{\star}$ computed to convergence, assessing the capacity of the solver to minimize Eq.~\eqref{eqn:expected_cdl}.
To evaluate dictionary recovery, we use the metric proposed by~\citet{moreau2020dicodile}, which aligns true and estimated atoms by convolutional cosine similarity (see~\cref{sec:dict_evaluation}).

\begin{figure*}[!ht]
    \centering
    \begin{subfigure}[t]{0.69\linewidth} % <-- the [t] forces top alignment
        \centering
        \includegraphics{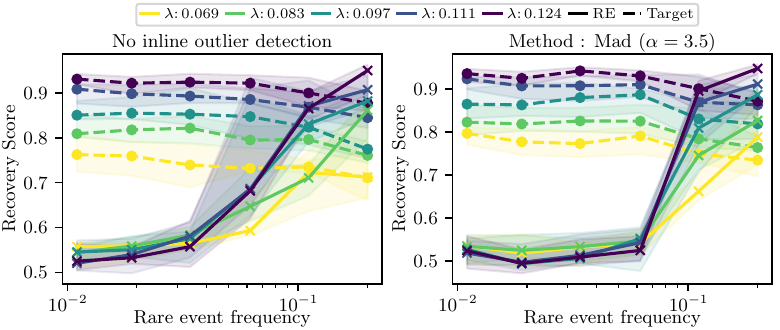}
        \label{fig:wincdl_2d_data}
    \end{subfigure}%
    \hfill
    \begin{subfigure}[t]{0.26\linewidth} % <-- also [t]
        \centering
        \includegraphics{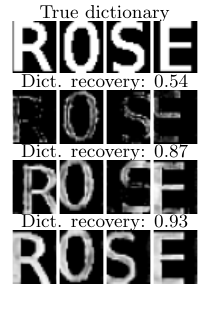}
    \end{subfigure}
    \caption{
    Comparison between \rosecdl{} without inline outlier detection (left panel) and with it (middle panel).
    Evolution of median recovery scores for \rosecdl{} on the \texttt{ROSE+Z} data set across 20 independent trials.
    Shaded regions represent the range between 25th and 75th percentiles.
    Right panel: illustration of recovered atoms for different recovery scores.
    } 
    \label{fig:wincdl_2d_data}
\end{figure*}

The empirical results shown in the left and middle panels of \cref{fig:RoseCDL_efficiency} demonstrate that \rosecdl{} exhibits superior scalability attributable to two architectural advantages shared with our implementation of DeepCDL: \textbf{(i)} GPU-optimized training yields substantial speedups when properly leveraged \textbf{(ii)} Fast Fourier Transform-based convolutions significantly reduce the computation time for large kernels compared to standard spatial convolutions.
The distinction from DeepCDL arises primarily from optimization strategy differences:
DeepCDL's unrolled architecture requires gradient computation over substantially larger parameter sets, increasing computational overhead compared to \rosecdl{}'s alternating minimization approach.

The right panel of \cref{fig:RoseCDL_efficiency} demonstrates that \rosecdl{} matches the dictionary recovery performance of DeepCDL and substantially better than Sporco, validating both computational efficiency and solution quality.

It is important to note that \alphacsc\ does not support two-dimensional data and was excluded from image-based experiments.
Despite this limitation, \rosecdl{} achieves comparable or superior test costs across all evaluated configurations.

To ensure that the results above are not artifacts of parameter choices, we evaluated \rosecdl{} under different setups and compared it to \alphacsc, assessing both computation acceleration and potential trade-offs in solution quality.
Varying the window size from 10 to 100 times the atom length produced only minor differences, with performance more strongly influenced by the choice of optimizer.
Using Adam resulted in a 7--8$\times$~speedup while keeping the error within +4\% of \alphacsc's, whereas SLS achieved smaller speedups (4.5--5.5$\times$) but reduced the error compared to \alphacsc.
When varying the signal length, \rosecdl{} exhibited sublinear scaling and remained usable on signal lengths where \alphacsc became prohibitive.
These experiments are detailed in Appendix~\ref{subsec:add_exp_scalability}.
\subsection{Robustness to outliers}

\begin{table*}[ht]
\centering
\resizebox{\textwidth}{!}{
\begin{tabular}{c@{}c l l c c c c c c c} 
\toprule
& & \textbf{Dim.} & \textbf{Dataset} 
& \textbf{\rosecdl{}} & \textbf{AB} & \textbf{DAGMM} & \textbf{MP} & \textbf{AT} 
& \textbf{TimesFM} & \textbf{TimesNet} \\
\midrule
\multirow{3}{*}{\rotatebox{90}{Pattern}} & \ldelim\{{3}{1pt} & U & ECG       & \textbf{0.534} & 0.099 & 0.077 & \underline{0.163} & 0.095 & 0.150 & 0.151 \\
& & U & SVDB      & \textbf{0.431} & 0.217 & -- & \underline{0.224} & 0.106 & 0.119 & 0.121 \\
& & M & Simulated & \textbf{0.986} & \underline{0.704} & 0.644 & --    & 0.624 & 0.652 & 0.590 \\
\midrule
\multirow{5}{*}{\rotatebox{90}{No Pattern}} & \ldelim\{{5}{1pt} & U & Daphnet  & 0.111 & \underline{0.186} & -- & 0.101 & 0.060 & 0.048 & \textbf{0.289} \\
& & U & MITDB     & 0.155 & \underline{0.186} & -- & \textbf{0.250} & 0.135 & 0.136 & 0.152 \\
& & U & Dodgers   & \textbf{0.238} & 0.078 & 0.102 & \underline{0.199} & 0.140 & 0.087 & 0.126 \\
& & M & MSL       & 0.136 & \textbf{0.171} & 0.088 & --    & 0.053 & \underline{0.136} & 0.111 \\
& & M & SMAP      & 0.111 & 0.136 & -- & --    & \textbf{0.564} & 0.107 & \underline{0.140} \\
\midrule
\multicolumn{4}{l}{\textbf{Avg. Runtime (s)}}
& \textbf{23.47} & 2594.71 & 23221.20 & \underline{687.12} & 11715.00 & 3986.07 & 1916.37 \\
\midrule
\multicolumn{4}{l}{\textbf{Avg. Pattern}}
& \textbf{0.65 $\pm$ 0.30} & 0.34 $\pm$ 0.32 & \underline{0.36 $\pm$ 0.40} & 0.19 $\pm$ 0.04 & 0.28 $\pm$ 0.30
& 0.31 $\pm$ 0.30 & 0.29 $\pm$ 0.26 \\
\multicolumn{4}{l}{\textbf{Avg. No Pattern}} 
& 0.15 $\pm$ 0.05 & 0.15 $\pm$ 0.05 & 0.09 $\pm$ 0.01 & \underline{0.18 $\pm$ 0.08} & \textbf{0.19 $\pm$ 0.21} 
& 0.10 $\pm$ 0.04 & 0.16 $\pm$ 0.07 \\
\bottomrule
\end{tabular}
}
\vspace{6pt}
\caption{
Comparison of AUC-PR of \rosecdl{} with state-of-the-art anomaly detection methods on both pattern (P) and non-pattern (NP) datasets, spanning univariate (U) and multivariate (M) settings. \rosecdl{} achieves the strongest performance on pattern anomaly datasets, its target regime, while remaining competitive on non-pattern data. Notably, \rosecdl{} delivers these results with a runtime more than two orders of magnitude faster than all deep baselines, and unlike Matrix Profile (MP), it scales seamlessly to multivariate inputs. Bold values indicate the best score in each row, underlined the second best. (AB : AnomalyBERT, AT : AnomalyTransformer).
}
\label{tab:sota_comparison}
\end{table*}

In this subsection, we study the role of the inline outlier detection module in enabling \rosecdl{} to reconstruct common patterns in the presence of anomalies and outlier patterns.
First, an example of \rosecdl{} mechanism on the Physionet Apnea-ECG~\citep{penzel2000apnea} is shown in \cref{fig:apnea_outliers_detection_viz} highlighting parts of the data that are discarded during training when using the inline outlier module as well as the outlier mask produced.
This shows that the reconstruction error highlights part of the signal that significantly differ from the rest, allowing to identify anomalies in the data.
This also stabilizes the learning, as shown in \cref{subsec:app_physionet} which presents a side-by-side visualization of atoms learned with and without inline outlier detection, demonstrating the clear benefits of this module.

To better quantify this effect, we constructed the \texttt{ROSE+Z} dataset, a semi-synthetic corpus of images generated from 5,000 characters drawn from four letters (\texttt{R}, \texttt{O}, \texttt{S}, and \texttt{E}) together with spaces, mimicking text-like documents.
A small proportion of the letter \texttt{Z} was added to introduce rare events.
Using this dataset, we assess how \rosecdl{} recovers the underlying patterns under varying levels of contamination.
For this experiment, we implemented the inline outlier detection module with the MAD method described in \cref{subsec:outlier_detection}, as preliminary comparisons against the quantile and z-score methods consistently favored MAD (see \cref{subsec:appx_choice_outlier_method}).
We then evaluated \rosecdl{} with and without inline outlier detection across 20 independent trials on the \texttt{ROSE+Z} dataset.
\cref{fig:wincdl_2d_data} shows the recovery score for both the common letters \texttt{ROSE} and the outlier letter \texttt{Z} depending on the outlier frequency.

The inline outlier detection stabilizes training, improves the quality of the learned dictionary, and reduces variability across trials by preventing rare events from contaminating the learning process.
While without inline outlier detection, the recovery slowly decreases, as the outlier gets mixed in with the common patterns, the inline mechanism allows for a sharper transition once the outlier frequency becomes too large and cannot be discarded anymore.
A key parameter of RoseCDL is the regularization coefficient $\lambda$, introduced in \cref{sec:methods}, which controls the sparsity of the learned activation vector.
We express $\lambda$ as a fraction of the maximum regularization value, $\lambda_{max}$, corresponding to the smallest regularization value for which the activation vector is entirely zero.
Our insights indicate that the best dictionary recovery is achieved when setting $\lambda$ = 0.1$\lambda_{max}$, effectively balancing sparsity and reconstruction accuracy. Qualitative and quantitative results are given in \cref{subsec:appx_lambda}.
The inline outlier detection also improves the algorithm's robustness with respect to this choice of the regularization parameter~$\lambda$.

\subsection{Anomaly detection}

We evaluated the anomaly detection capability of \rosecdl{} on diverse datasets from the TSB-UAD and TSB-AD benchmarks \citep{paparrizos2022tsb,liuelephant2024}, covering both univariate and multivariate cases.
Since \rosecdl{} targets \emph{pattern anomalies}, we included datasets both well- and poorly-aligned with this inductive bias to assess its strengths and limitations.

For baselines, we compared against representative unsupervised methods across two major families identified by~\citet{darban2024deep}: forecasting-based and reconstruction-based.
Specifically, we selected competitive examples such as the Anomaly Transformer \citep[AT;][]{xu2022anomaly} and TimesNet \citep{wutimesnet}, which respectively exemplify reconstruction-based and forecasting-based approaches.
To further broaden the comparison, we included the classical Matrix Profile method \citep[MP;][]{yehtimeseries2016}, representing a strong shallow baseline, as well as the recently proposed foundation model TimesFM \citep{das2024decoder}, which extends beyond the scope of recent surveys such as~\citet{Ruff2021} and~\citet{darban2024deep}.

As summarized in \cref{tab:sota_comparison}, \rosecdl{} consistently achieves competitive or superior performance in terms of AUC-\textcolor{blue}{PR} on datasets with pattern anomalies.
This validates its central design principle of leveraging local structures.
On datasets lacking strong patterns (e.g., Daphnet, Dodgers, MSL, SMAP) or where anomalies arise outside the pattern dimension (e.g., MITDB), performance drops but remains comparable to baselines, highlighting both the scope and the boundaries of the proposed method.
More precisely, anomalies spanning several atom lengths can still be detected, but anomalies affecting long-range temporal dynamics or global drifts without disrupting local shape are less aligned with CDL assumptions, which explains the performance gap on datasets like MITDB where anomalies primarily affect heartbeat timing rather than morphology.

Examples of both missed and detected anomalies on various datasets are given in \cref{subsec:anom_examples}.
Beyond accuracy, \rosecdl{} is over two orders of magnitude faster than competing methods, enabling scalability to large collections of time series.

These results demonstrate that \rosecdl{} can extract meaningful patterns from corrupted real-world data without explicit preprocessing or prior knowledge of the outlier proportion.
This is made possible by the inline outlier detection module, which enhances both training stability and robustness to hyperparameter choices.

\section{CONCLUSION}\label{sec:conclusion}
\looseness=-1
In this study, we introduce \rosecdl{}, a robust and scalable approach to Convolutional Dictionary Learning (CDL).
While CDL has traditionally focused on reconstructing signals, we reframe it as estimating the distribution of local patches.
This approach supports the two main features of \rosecdl{}: stochastic windowing for scalability and inline outlier detection for robustness to anomalies.
Together, these components make \rosecdl{} suitable not only for large-scale pattern discovery but also for unsupervised anomaly and rare-event detection.
Concretely, the method first models the common patterns in a signal, removes them, and then learns a dictionary on the residuals.
This yields a robust representation and an outlier mask, while also uncovering the distinctive patterns that occur specifically within anomalous regions.

A central concept of this paper is reframing CDL as a tool to characterize the distribution of patterns in the signal.
Beyond improving scalability and robustness, this perspective also provides a principled foundation for anomaly and rare-event detection:
local reconstruction errors naturally reflect deviations from the learned patch distribution.

\paragraph{Limitations.}
Like all CDL-based methods, our approach relies on a non-convex objective.
Nevertheless, the optimization scheme builds on well-established principles, for which convergence to meaningful local minima has been observed and studied in prior work~\citep{gribonval_sparse_spurious,malezieux2021understanding,tolooshams2022stable}.
Our trimming strategy intentionally excludes poorly reconstructed patches during training, which introduces bias but is consistent with our design goal of separating atypical regions rather than fitting them.
While this mechanism confers robustness to outliers, it is conservative by nature: many segments may be flagged as atypical.
Consequently, \rosecdl{} is better suited for robust representation learning and rare event detection than for high-precision anomaly scoring.

% XXX: todo add acknowledgement
\section*{Acknowledgement}

This project was supported by the French National Research Agency (ANR) through the BenchArk project (ANR-24-IAS2-0003) and EBUL project (ANR-
23-CE23-0001).
Mansour Benbakoura was supported from a national grant attributed to the ExaDoST project of the NumPEx PEPR program, under the reference ANR-22-EXNU-0004.
This work was performed using HPC resources from GENCI–IDRIS (Grant 2025-AD011015308R1). 

\bibliographystyle{abbrvnat}

\bibliography{references}

%%%%%%%%%%%%%%%%%%%%%%%%%%%%%%%%%%%%%%%%%%%%%%%%%%%%%%%%%%%%
\section*{Checklist}

% %%% BEGIN INSTRUCTIONS %%%
% The checklist follows the references. For each question, choose your answer from the three possible options: Yes, No, Not Applicable.  You are encouraged to include a justification to your answer, either by referencing the appropriate section of your paper or providing a brief inline description (1-2 sentences). 
% Please do not modify the questions.  Note that the Checklist section does not count towards the page limit. Not including the checklist in the first submission won't result in desk rejection, although in such case we will ask you to upload it during the author response period and include it in camera ready (if accepted).

% \textbf{In your paper, please delete this instructions block and only keep the Checklist section heading above along with the questions/answers below.}
% %%% END INSTRUCTIONS %%%

\begin{enumerate}

  \item For all models and algorithms presented, check if you include:
  \begin{enumerate}
    \item A clear description of the mathematical setting, assumptions, algorithm, and/or model. Yes
    \item An analysis of the properties and complexity (time, space, sample size) of any algorithm. Yes
    \item (Optional) Anonymized source code, with specification of all dependencies, including external libraries. Yes
  \end{enumerate}

  \item For any theoretical claim, check if you include:
  \begin{enumerate}
    \item Statements of the full set of assumptions of all theoretical results. Yes
    \item Complete proofs of all theoretical results. Yes
    \item Clear explanations of any assumptions. Yes
  \end{enumerate}

  \item For all figures and tables that present empirical results, check if you include:
  \begin{enumerate}
    \item The code, data, and instructions needed to reproduce the main experimental results (either in the supplemental material or as a URL). Yes
    \item All the training details (e.g., data splits, hyperparameters, how they were chosen). Yes
    \item A clear definition of the specific measure or statistics and error bars (e.g., with respect to the random seed after running experiments multiple times). Yes
    \item A description of the computing infrastructure used. (e.g., type of GPUs, internal cluster, or cloud provider). Yes
  \end{enumerate}

  \item If you are using existing assets (e.g., code, data, models) or curating/releasing new assets, check if you include:
  \begin{enumerate}
    \item Citations of the creator If your work uses existing assets. Yes
    \item The license information of the assets, if applicable. Not Applicable
    \item New assets either in the supplemental material or as a URL, if applicable. Yes
    \item Information about consent from data providers/curators. Yes
    \item Discussion of sensible content if applicable, e.g., personally identifiable information or offensive content. Not Applicable
  \end{enumerate}

  \item If you used crowdsourcing or conducted research with human subjects, check if you include:
  \begin{enumerate}
    \item The full text of instructions given to participants and screenshots. Not Applicable
    \item Descriptions of potential participant risks, with links to Institutional Review Board (IRB) approvals if applicable. Not Applicable
    \item The estimated hourly wage paid to participants and the total amount spent on participant compensation. Not Applicable
  \end{enumerate}

\end{enumerate}

%%%%%%%%%%%%%%%%%%%%%%%%%%%%%%%%%%%%%%%%%%%%%%%%%%%%%%%%%%%%%%%%%%%%%%%%%%%%%%%
%%%%%%%%%%%%%%%%%%%%%%%%%%%%%%%%%%%%%%%%%%%%%%%%%%%%%%%%%%%%%%%%%%%%%%%%%%%%%%%
% APPENDIX
%%%%%%%%%%%%%%%%%%%%%%%%%%%%%%%%%%%%%%%%%%%%%%%%%%%%%%%%%%%%%%%%%%%%%%%%%%%%%%%
%%%%%%%%%%%%%%%%%%%%%%%%%%%%%%%%%%%%%%%%%%%%%%%%%%%%%%%%%%%%%%%%%%%%%%%%%%%%%%%
\newpage
\appendix
\onecolumn
\counterwithin{figure}{section}
\def\thefigure{\thesection.\arabic{figure}}
\counterwithin{table}{section}
\def\thetable{\thesection.\arabic{table}}

\section{ANALYTICAL STUDY}\label{sec:study}

\stability*

\begin{proof}
We consider a dictionary $D \in \R^{1\times L}$ with a single atom $\bd$.
As we consider signals composed of patterns with no overlap, we can separate each segment and we have a population of signals $X = z d_i + \epsilon$, with $z \in \R$, $\epsilon \sim \mathcal N(0, \sigma^2 I)$ and $d_i = \bd_a$ with probability $1-\rho$ and $\bd_b$ with probability $\rho$, with $\|\bd_a\|_2 = \|\bd_b\|_2 = 1$.
We consider all atoms $\bd, \bd_a, \bd_b$ to be unit norm.
Wlog, we can consider $z = 1$, as this amounts to rescaling the value of $\lambda_{max}$, and we consider that $c_a=\bd^\top \bd_a$ and  $c_b = \bd^\top \bd_b$ are positive, as we can consider $-\bd$ otherwise.
We also consider that the noise level is small enough such that $\sigma^2 < c_j$.

This model is a simplified model in which we have a population of signals where we want to identify the pattern of an event $\bd_a$ from the pattern of a rare event $\bd_b$.

In this setting, if we further have that the auto-correlation of $\bd$ with $\bd_a$ and $\bd_b$ is maximal when they are aligned, then the sparse coding of a signal $X$ can be computed with the following formula:
\begin{equation}
z^*(X, \bd) = \begin{cases}
    0 & \text{if}\quad c + \epsilon^\top \bd \le \lambda\\
    c + \epsilon^\top \bd - \lambda & \text{otherwise}
\end{cases}
\end{equation}
with $c = \bd^\top d_i$, which has value $c_a$ with probability $1-\rho$ and $c_b$ otherwise.

We can compute the loss value for this $z^*(X, \bd)$ for $X$ where $z^*$ is non-zero:
\begin{align}
    F(\bd, z^*; X) & = \frac12 \|X - z^* \bd\|_2^2 + \lambda \|z^*\|_1\\
            & = \frac12 \left(\|X\|_2^2 -2 (c + \epsilon^\top \bd - \lambda) (c + \epsilon^\top \bd) + \|(c + \epsilon^\top \bd - \lambda)\bd\|_2^2\right) + \lambda |c + \epsilon^\top \bd - \lambda|\\
            & = \frac12 \left(\|X\|_2^2 -2 (c + \epsilon^\top \bd - \lambda) (c + \epsilon^\top \bd) + (c + \epsilon^\top \bd - \lambda)^2 + 2 \lambda (c + \epsilon^\top \bd - \lambda)\right) \\
            & = \frac12 \left(\|X\|_2^2 -2 (c + \epsilon^\top \bd - \lambda) (c + \epsilon^\top \bd - \lambda) + (c + \epsilon^\top \bd - \lambda)^2\right) \\
            & = \frac12 \left(\|X\|_2^2 -  (c + \epsilon^\top \bd - \lambda)^2\right) \\
            & = \frac12 \left(\|d_i\|_2^2 -  (c - \lambda)^2 + \|\epsilon\|_2^2 - (\epsilon^\top \bd)^2 -2(1 - (c-\lambda))\epsilon^\top \bd\right) \\
\end{align}
Taking the expectation over the noise yields:
\begin{equation}
    \bbE_{\epsilon} \left[F(\bd, z^*; X)\right] = \frac12 \left(1 -  (c - \lambda)^2 + (L-1)\sigma^2\right)
    \label{eq:optimal_loss}
\end{equation}For $c$ between $\lambda$ and $1$, this function is decreasing in $c$, meaning that for two samples constructed with $\bd_a$ and $\bd_b$, if the correlation $c_0 = \bd^\top \bd_a$ is larger than the correlation $c_b = \bd^\top \bd_b$, then the reconstruction loss for sample $0$ is smaller in expectation than the reconstruction loss for a sample $1$.

We can also compute the gradient of this function with respect to $\bd$.
Note that with the KKT condition defining $z^*$, we have that the $\nabla_z F(\bd, z*; X) = 0$, and thus we do not need to compute the Jacobian of $z^*$ when computing the derivative of $F$ with respect to $\bd$.
The gradient reads:
\begin{align}
    \nabla_{\bd} F(\bd, z^*; X) & = z^*(z^* \bd - X)\\
            & = (z^*)^2 \bd - z^*X\\
            & = (c + \epsilon^\top \bd - \lambda)^2 \bd - (c + \epsilon^\top \bd - \lambda)(d_i + \epsilon)
\end{align}
Taking the expectation over the noise yields:
\begin{align}
    \bbE_{\epsilon} \left[\nabla_{\bd} F(\bd, z^*; X)\right] &= ((c - \lambda)^2 + \sigma^2) \bd - (c - \lambda)d_i + \underbrace{\bbE_{\epsilon}\left[\epsilon^\top \bd\epsilon\right]}_{\sigma^2\bd}\\
    &= ((c - \lambda)^2 + 2\sigma^2) \bd - (c - \lambda)d_i
\end{align}
This yields
$$
\mathbb{E}[\nabla_\bd F(\bd, z^*; X)] 
= \left((1-\rho)(c_a - \lambda)^2 + \rho(c_b - \lambda)^2 + 2\sigma^2\right)\bd
- (1-\rho)(c_a - \lambda)\bd_a - \rho(c_b - \lambda)\bd_b
$$

In the noiseless case, if $\bd = \bd_a$, and $\lambda \le c_b = (\bd_b)^\top \bd_a < 1$, with the classical algorithm, the expected gradient reads
\begin{align}
    \bbE_{X} \left[\nabla_{\bd} F(\bd_a, z^*; X)\right] & = -(1-\rho)\lambda(1-\lambda)\bd_a + \rho((c_a -  \lambda)^2 \bd_a - (c_b - \lambda) \bd_b)\\
            &= \left( \rho(c_a -  \lambda)^2 -(1-\rho)\lambda(1-\lambda)\right)\bd_a - \rho(c_b - \lambda) \bd_b
\end{align}
This gradient is not colinear with $\bd_a$, showing that $\bd_a$ is not a fixed point of the projected gradient descent algorithm in this context.
Even in a noiseless and very simple setting, the $\bd_a$ is not a solution of the Classical CDL algorithm.

In contrast, when using the least trimmed square procedure with a trimming threshold rejecting a proportion $\rho$ of the samples, we can show that $\bd_a$ is a fixed point in the noiseless setting.
As seen in \eqref{eq:optimal_loss}, the loss for samples $X$ associated with $\bd_b$ is smaller than the loss for samples associated with $\bd_a$, and therefore rejecting $\rho$ samples from the gradient computation leads to:
\begin{equation}
    \bbE_{X} \left[\nabla_{\bd} F(\bd_a, z^*; X)\right] = -(1-\rho)\lambda(1-\lambda)\bd_a
\end{equation}
as the gradient is colinear with $\bd_a$, thus $\bd_a$ is a fixed point of the projected gradient descent and of the learning procedure.
\end{proof}

\clearpage
\section{ADDITIONAL EXPERIMENTS}
\label{sec:add_exp}

In this section, we present two numerical experiments intended to characterize the behavior of RoseCDL.
We first discuss the influence of the regularization parameter~$\lambda$ on the ability of RoseCDL to learn the atoms.
Then, we highlight the importance of the inline outlier detection module in making the CDL robust to outliers.

\subsection{Scalability}
\label{subsec:add_exp_scalability}

To thoroughly assess scalability, we conduct additional experiments varying both window sizes and signal lengths.

\subsubsection{Window size}

For window size analysis on 1D signals with $T=100{,}000$ and $\lambda=0.8$, we evaluate both Adam and SLS optimizers across window sizes ranging from $10L$ to $100L$. To ensure full GPU utilization, we maintain a constant product window size $\times$ batch size. As shown in \cref{tab:window_scaling}, RoseCDL consistently achieves validation losses within $4\%$ of \alphacsc\ performance across all configurations, while maintaining runtimes of $12$--$22\%$ relative to \alphacsc, corresponding to approximately $5\times$ speedup.

\begin{table*}[htbp]
\centering
\begin{tabular}{lcccccccccc}
\toprule
\textbf{Optimizer} & \multicolumn{4}{c}{\textbf{Adam}} & & \multicolumn{4}{c}{\textbf{SLS}} \\
\cmidrule(lr){2-5} \cmidrule(lr){7-10}
\textbf{Window size} 
& $10L$ & $20L$ & $50L$ & $100L$ & & $10L$ & $20L$ & $50L$ & $100L$ \\
\midrule
\textbf{Validation loss} 
& $+3.0\%$ & $+3.8\%$ & $+2.7\%$ & $+2.7\%$ & & $-0.2\%$ & $-0.2\%$ & $-0.2\%$ & $\textbf{-0.4\%}$ \\
\textbf{Runtime} 
& $15.4\%$ & $\textbf{12.3\%}$ & $12.5\%$ & $12.7\%$ & & $21.7\%$ & $21.2\%$ & $16.3\%$ & $17.6\%$ \\
\bottomrule
\end{tabular}

% \vspace{6pt}
\caption{
Comparison of Adam and SLS with different window sizes.
Validation loss expressed relatively to \alphacsc's.
Runtime expressed in proportion to \alphacsc's.
}
\label{tab:window_scaling}
\end{table*}

Notably, the SLS optimizer demonstrates superior convergence properties with validation losses within $0.4\%$ of \alphacsc, though with slightly increased runtime compared to Adam. The results validate that border effects do not hinder convergence with our stochastic windowing approach, even for small window sizes (as low as $10L$).

\subsubsection{Signal length}

For signal length scalability, we evaluate performance on signals ranging from $10$k to $1$M time samples. As detailed in \cref{tab:signal_scaling}, RoseCDL demonstrates sublinear scaling, with runtimes growing from $15.2$\,s ($10$k samples) to $202.8$\,s ($1$M samples), while \alphacsc\ becomes computationally prohibitive beyond $100$k samples. This superior scalability enables RoseCDL to process signals substantially larger than existing full-signal sparse coding methods.

\begin{table*}[htbp]
\centering
\begin{tabular}{lccccc}
\toprule
$T$ & 10k & 30k & 100k & 300k & 1M \\
\midrule
Runtime RoseCDL (s)    & 15.2 & 16.4 & 32.6 & 68.0 & 202.8 \\
Runtime \alphacsc\ (s) & 67.5 & 102.8 & 198.5 & N/A  & N/A  \\
\bottomrule
\end{tabular}

% \vspace{6pt}
\caption{Signal length scaling: runtime comparison between RoseCDL and \alphacsc\ for varying signal sizes $T$.}
\label{tab:signal_scaling}
\end{table*}

\subsection{Choice of the outlier detection method}
\label{subsec:appx_choice_outlier_method}

To evaluate the performance of RoseCDL on 2D data, we constructed a semi-synthetic dataset of images by generating 5000 characters sampled from a set of four letters (\texttt{R}, \texttt{O}, \texttt{S}, and \texttt{E}) along with spaces.
These images emulate text-like documents composed of words formed from the selected characters.
We added the letter \texttt{Z} in a small proportion to introduce rare events.
Experiments were conducted with a 10\% contamination rate.
Consequently, for the inline outlier detection methods, we implemented the quantile detection method with $\alpha$ value: \SI{10}{\percent}, alongside with MAD and z-score methods as described in \cref{subsec:outlier_detection}.
\begin{figure}
  \centering
\includegraphics[width=0.48\linewidth]{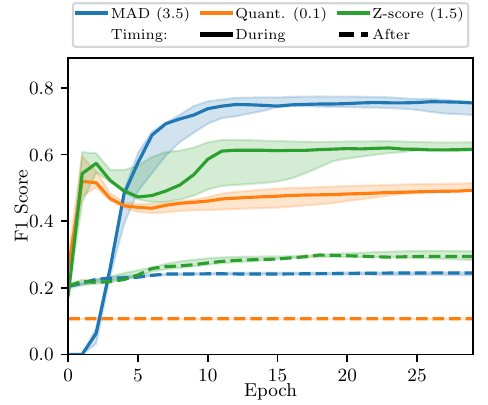} % 
  \caption{Comparison of the F1 score evolution of different methods over the epochs on rare event detection task on the (\texttt{R}, \texttt{O}, \texttt{S}, \texttt{E}, and \texttt{Z}).}
  \label{fig:all_methods_rare_event_detection}
\end{figure}
To assess the efficiency of the inline outlier module on this dataset, we compared it to RoseCDL without the inline outlier module, where we computed the threshold by computing the reconstruction error after the learning part.
We show the detection over the epochs using different outlier detection algorithms by computing the F1 score between the inline model mask for the \texttt{during} procedure and the mask after the reconstruction by a RoseCDL model without an inline outlier method for the \texttt{after} procedure in \cref{fig:all_methods_rare_event_detection}. The inline outlier detection module, particularly the MAD method, works by discarding the rare events from the dictionary learning part. These rare events are reconstructed with lower fidelity compared to the more prevalent patterns. This selective degradation results in higher reconstruction errors that are sharply localized at the rare events' positions and improves their detection. In opposition to that, in the absence of this module, rare events are still reconstructed less accurately than common patterns; however, the contrast in reconstruction quality is less distinct. This diminishes the precision of rare event detection and ultimately leads to a lower F1 score.

\subsection{Inline outlier detection on real-world data}
\label{subsec:app_physionet}

\begin{figure}[ht]
    \centering
    \includegraphics[trim=10pt 7pt 4pt 8pt, clip, width=0.4\linewidth, height=2.7in]{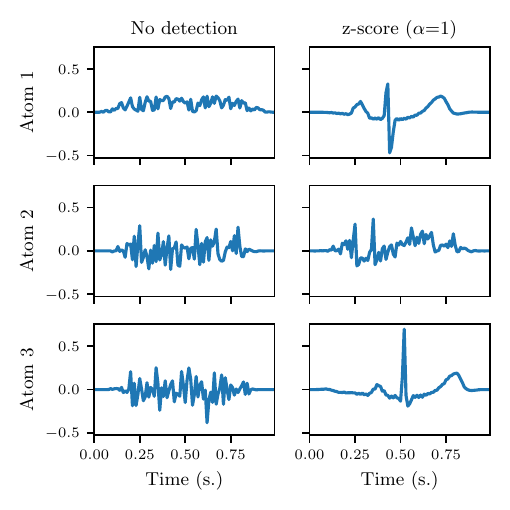}%

    \caption{Learned atoms with and without outliers detection method, on 10 bad trials of subject a02 of dataset Physionet Apnea-ECG.}%
    \label{fig:apnea_atoms}
\end{figure}

We evaluate our approach through two complementary experiments.
First, we consider the Physionet Apnea-ECG dataset~\citep{penzel2000apnea} without any preprocessing, thereby preserving the raw variability of the signals.
From a \SI{10}{\minute} ECG segment containing blocks of anomalies, we learned a three-atom dictionary (\SI{1}{\second} each) to assess robustness against corrupted data (\cref{fig:apnea_outliers_detection_viz}).
Without outlier detection, the model converges to noise-like patterns, whereas the inline anomaly detection mechanism suppresses high-variance anomalies and enables recovery of meaningful ECG atoms (\cref{fig:apnea_atoms}).
While careful parameter tuning can sometimes yield similar results, it remains unreliable and computationally costly.

\subsection{Influence of the regularization parameter}
\label{subsec:appx_lambda}

A key parameter of RoseCDL is the regularization coefficient $\lambda$, introduced in \cref{sec:methods}, which controls the sparsity of the learned activation vector.
We express $\lambda$ as a fraction of the maximum regularization value, $\lambda_{\max}$, corresponding to the smallest regularization value for which the activation vector is entirely zero.
Note that when using the trimmed objective, $\lambda_{\max}$ computation is adapted to account for the modified loss.
To analyze the effect of regularization on dictionary recovery, we compare recovery scores across different values of $\lambda$ using synthetic data.
The results, presented in \cref{fig:exp_lambda_1d}, demonstrate how varying $\lambda$ influences the quality of the recovered dictionary.
Our findings indicate that the best dictionary recovery is achieved when setting $\lambda = 0.1 \lambda_{\max}$, effectively balancing sparsity and reconstruction accuracy.

\begin{figure}[h]
    \centering
    \begin{subfigure}{0.49\linewidth}
        \centering
        \includegraphics[width=\linewidth]{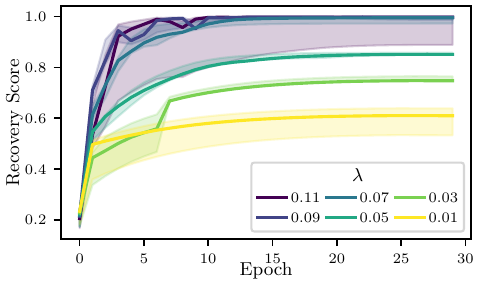}
        \label{fig:exp_lambda_1d_a}
    \end{subfigure}%
    \begin{subfigure}{0.49\linewidth}
        \centering
        \includegraphics[width=\linewidth]{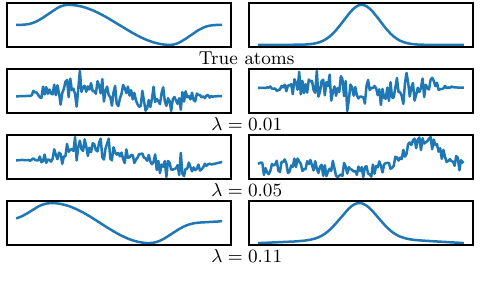} 
        \label{fig:exp_lambda_1d_b}
    \end{subfigure}
    \vspace{-6pt}
    \caption{(a) Impact of the regularization on the recovery score over the epochs.
    (b) Atoms recovered at the end of training for different regularization values.} 
    \label{fig:exp_lambda_1d}
\end{figure}

\subsection{Anomalies : Missed and detected examples}
\label{subsec:anom_examples}

\begin{figure}[h]
\centering 
\begin{subfigure}{0.49\linewidth} 
\centering 
\includegraphics[width=\linewidth]{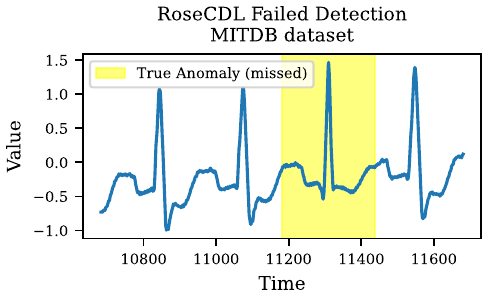} 
\end{subfigure}% 
\begin{subfigure}{0.49\linewidth} 
\centering 
\includegraphics[width=\linewidth]{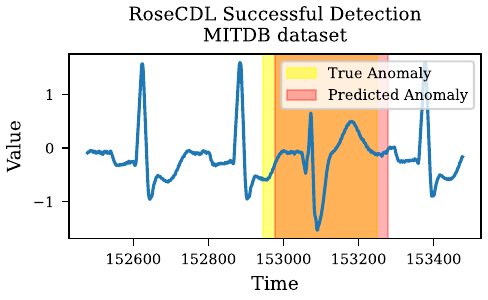} \end{subfigure} 
\vspace{-6pt} 
\caption{Examples of missed and detected anomalies on the MITDB Arythmia dataset.} 
\label{fig:mitdb_anomalies} \end{figure}

\begin{figure}[ht]
    \centering
    \begin{subfigure}{0.49\linewidth}
        \centering
        \includegraphics[width=\linewidth]{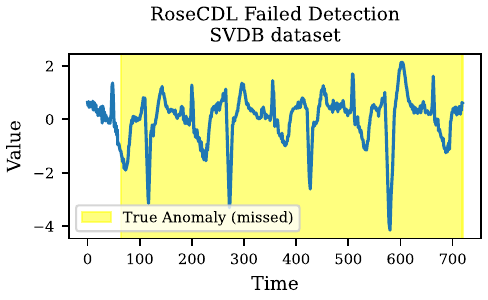}
        \includegraphics[width=\linewidth]{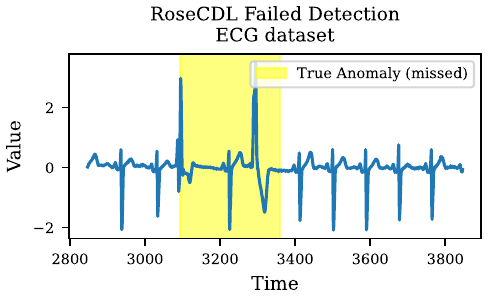}
        \includegraphics[width=\linewidth]{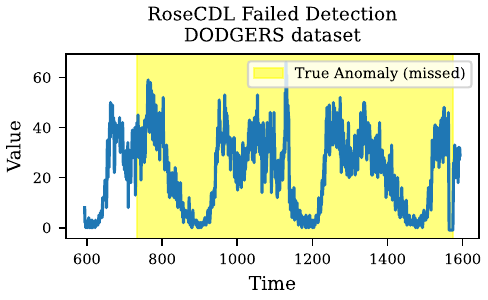}
        \includegraphics[width=\linewidth]{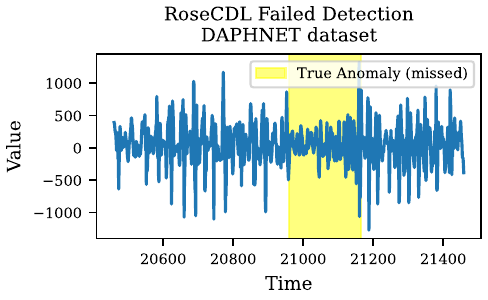}
    \end{subfigure}%
    \begin{subfigure}{0.49\linewidth}
        \centering
        \includegraphics[width=\linewidth]{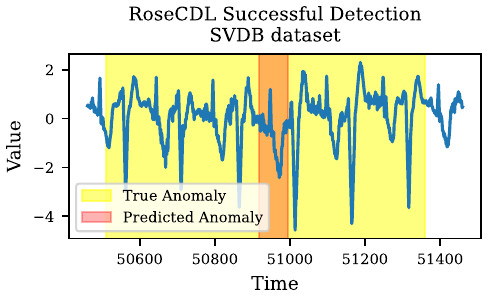}
        \includegraphics[width=\linewidth]{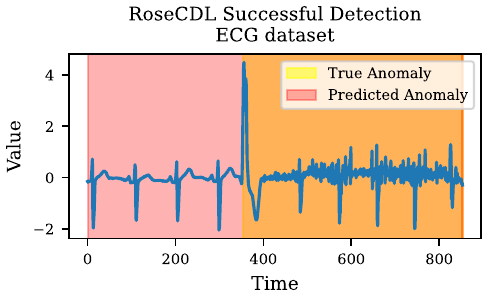}
        \includegraphics[width=\linewidth]{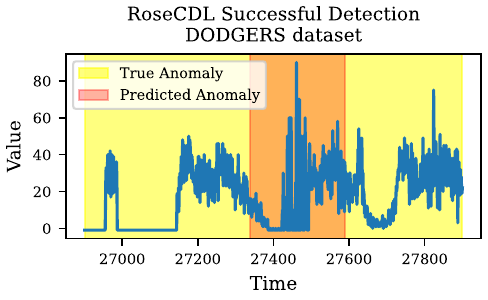}
        \includegraphics[width=\linewidth]{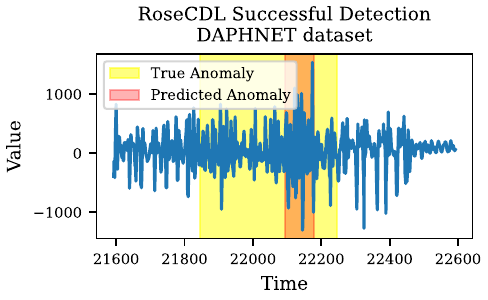}
    \end{subfigure}
    \vspace{-6pt}
    \caption{Examples of missed (left) and detected (right) anomalies across datasets: MITDB, SVDB, ECG, Dodgers, and Daphnet.}
    \label{fig:all_anomalies}
\end{figure}

\clearpage
\section{DATA SIMULATION}
\label{subsec:cdl_data_simulation}

The synthetic multivariate 1D signals $X \in \R^{P \times T}$ used in Sect.~\ref{sec:numerical_experiments} are generated from a dictionary $D \in \R^{K \times P \times L}$, a sparse activation vector $Z \in \R^{K \times (T - L + 1)}$, and a random Gaussian noise $\varepsilon \sim \Norm{0}{\sigma^2}$ as $X = D*Z + \varepsilon$. In this definition,
\begin{itemize}
    \item $P$ is the number of channels,
    \item $T$ is the length of the signal,
    \item $K$ is the number of atoms,
    \item $L$ is the length of the atoms.
\end{itemize}
In the experiments conducted in Sect.~\ref{sec:numerical_experiments}, we generated signals of length $T = 50\ 000$ with $P = 2$ channels from dictionaries with $K = 2$ atoms of length $L = 64$.
The atoms were generated from sine and gaussian waveforms, as illustrated in Fig.~\ref{fig:simulation_dict_true}.
The activations $Z$ were randomly generated sparse Dirac combs with sparsity 0.4~\% and the noise level was set to $\sigma = 0.1$.

\begin{figure}[h]
    \centering
    \includegraphics[width=\textwidth]{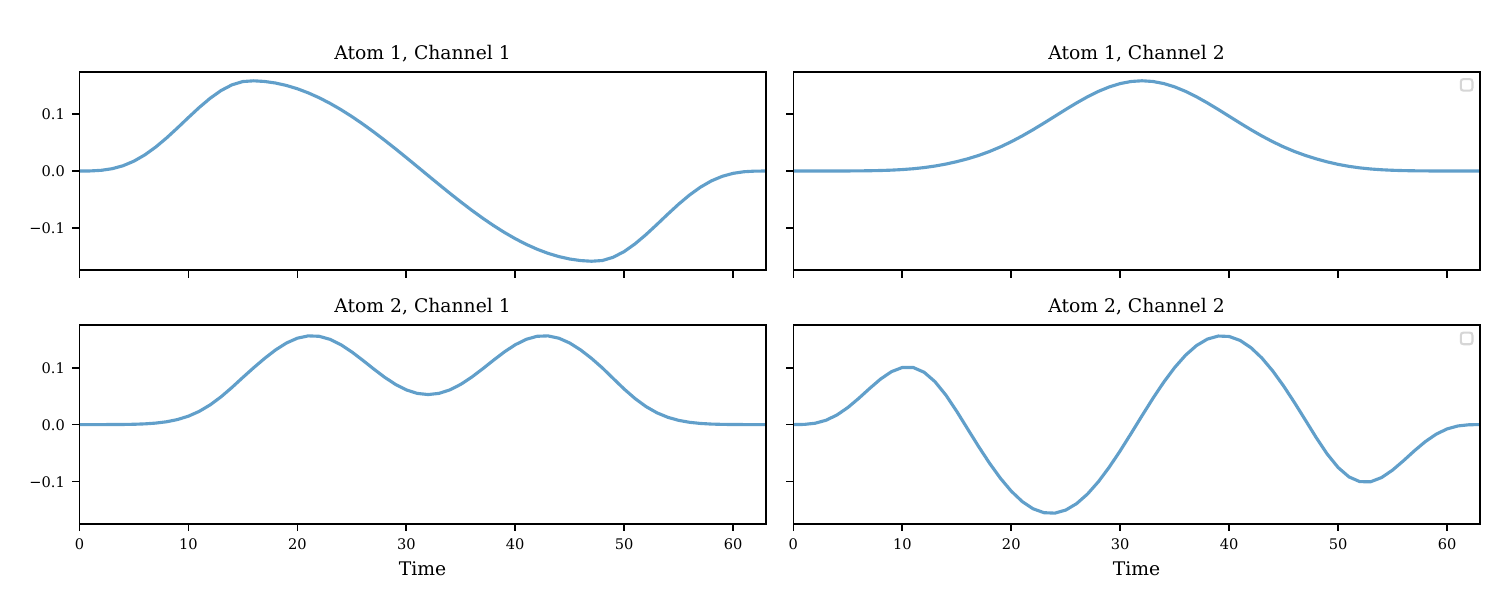}
    \caption{True dictionary in experiments on synthetic data.}
    \label{fig:simulation_dict_true}
\end{figure}
\section{DICTIONARY EVALUATION}\label{sec:dict_evaluation}
In our methodology, we evaluate the effectiveness of a learned dictionary, denoted as $\widehat{\bD}\in\R^{K'\times P\times L'}$, by comparing it against a set of true dictionary patterns, represented as $\bD\in\R^{K\times P\times L}$ and computing a \enquote{recovery score}, using the convolutional cosine similarity following optimal assignment, as defined by
~\cite{moreau2020dicodile}.
The learned dictionary and the true patterns are structured as three-dimensional arrays, where dimensions correspond to the number of atoms, channels, and atoms' duration.
The learned dictionary may differ from the true dictionary in terms of the number of atoms and the length of time atoms, typically featuring more atoms and extended durations.

The evaluation process involves a computational step known as multi-channel correlation.
In this step, each atom of the learned dictionary is systematically compared with each pattern in the true dictionary.
This comparison is carried out channel by channel, aggregating the results to capture the overall similarity between the dictionary atom and the pattern.

After performing these comparisons for all combinations of atoms and patterns, we create a matrix that represents the correlation strengths between each pair.
To objectively assess the quality of the learned dictionary, we use an optimization technique called the Hungarian algorithm.
This algorithm finds the best possible \enquote{matching} between the learned dictionary atoms and the true patterns, aiming to maximize the overall correlation.

The final score, which quantifies the performance of the learned dictionary, is derived by averaging the values of these optimal matchings.
This score is scaled between 0 and 1, where 1 represents the best possible performance.
A higher score indicates that the learned dictionary more accurately represents the true dictionary patterns, providing a measure of its quality and effectiveness in capturing the essential features of the data.

Mathematically, the recovery score between the dictionaries $\widehat{\bD}$ and $\bD$ can be expressed as follow:
\begin{equation}
    \text{score} = \frac{1}{K} \sum_{i=1}^K C_{i, j^*(i)} \enspace ,
\end{equation}
where $j^*(i), i=1,\dots,K$ denote the results of the linear sum assignment problem~\citep{crouse2016implementing}\footnote{We use the \texttt{SciPy}'s implementation.} on correlation matrix $C \coloneqq \corr{\bD}{\widehat{\bD}} \in\R^{K\times K'}$, with $\forall i\in\intervalleEntier{1}{K}$, $\forall j\in\intervalleEntier{1}{K'}$,
\begin{equation}
    C_{i,j} = \maxx{l=1, \dots, L+L'-1} \corr[\text{2D}]{D_i}{\widehat{D}_j} \bracks{l} \in \R \enspace ,
\end{equation}
where $D_i\in\R^{P\times L}$ and $\widehat{D}_j\in\R^{P\times L'}$.
The multivariate \enquote{2D} correlation between the two matrices $D$ and $\widehat{D}$ is defined as follow:
\begin{equation}
    \corr[\text{2D}]{D}{\widehat{D}} = \sum_{p=1}^P \corr[\text{1D}]{d_p}{\hat{d}_p} \in \R^{L+L'-1} \enspace ,
\end{equation}
where $d_p\in\R^L$ and $\hat{d}_p\in\R^{L'}$.
The 1D \enquote{\textit{full}} correlation between the two vectors $d$ and $\hat{d}$ is defined as follow, ${\forall t\in\intervalleEntier{1}{L+L'-1}}$:
\begin{equation}
    \corr[\text{1D}]{d}{\hat{d}}\bracks{t} = \pars{d\ast\hat{d}}\bracks{t-T+1} = \sum_{l=1}^{L} d[l]\hat{d}\bracks{l-t+T} \in \R \enspace ,
\end{equation}
where $T\coloneqq \max\pars{L,L'}$.
\section{EXPERIMENTS SETUP}
\label{experiment-setup}

\begin{table}[h!]
\centering
\begin{tabular}{p{2cm} p{4.2cm} p{1cm} p{3cm} l}
\toprule
Experiment & Data size & Number of runs & Hardware \\
\midrule
Runtime 1D   & 20 signals, 50,000 data points, 2 channels   & 50   & GPU NVIDIA A40, 30 CPUs  & \\
Runtime 2D  & $2000 \times 2000$ grayscale images   & 20   & GPU NVIDIA A40 & \\
Regularization impact 1D  & 10 signals, 30,000 data points, 1 channel  & 10  & GPU NVIDIA A40 & \\
Regularization impact 2D  & $2000 \times 2000$ grayscale images  & 10  & GPU NVIDIA A40 & \\
Inline vs After  & 2D image data  & 10  & GPU NVIDIA A40 & \\
Physionet  & 10 trials of subject a02  & 10  & GPU NVIDIA A40 and CPUs & \\

\bottomrule
\end{tabular}
\label{tab:experiment-setup}
\end{table}

\end{document}